\def\eqref#1{equation~\ref{#1}}
\def\1{\bm{1}}
\DeclareMathAlphabet{\mathsfit}{\encodingdefault}{\sfdefault}{m}{sl}
\SetMathAlphabet{\mathsfit}{bold}{\encodingdefault}{\sfdefault}{bx}{n}
\newtheorem{theorem}{Theorem}
\newtheorem{lemma}{Lemma}
\newtheorem{definition}{Definition}
\title{Occam Gradient Descent}
\author{%
  B.N. Kausik\thanks{https://sites.google.com/view/bnkausik/profile \\ Thanks to Jay Jawahar, Ravi Kannan, Prasad Tadepalli and Leslie Valiant for comments and suggestions.} \\
  Independent\\
  \texttt{bnkausik@gmail.com} \\
  }
\begin{document}

\maketitle

\begin{abstract}
Deep learning neural network models must be large enough to adapt to their problem domain, while small enough to avoid overfitting training data during gradient descent.  To balance these competing demands, over-provisioned deep learning models such as transformers are trained for a single epoch on large data sets, and hence inefficient with both computing resources and training data.  In response to these inefficiencies, we derive a provably good algorithm that can combine any training and pruning methods to simultaneously optimize efficiency and accuracy,  identifying conditions that resist overfitting and reduce model size while outperforming the underlying training algorithm. We then use the algorithm to combine gradient descent with magnitude pruning into "Occam Gradient Descent." With respect to loss, compute and model size  (a) on image classification benchmarks, linear and convolutional neural networks trained with Occam Gradient Descent outperform traditional gradient descent with or without post-train pruning; (b) on a range of tabular data classification tasks, neural networks trained with Occam Gradient Descent outperform traditional gradient descent, as well as Random Forests; (c) on natural language transformers, Occam Gradient Descent outperforms traditional gradient descent.
\end{abstract}

\section{Introduction}

Deep learning models are artificial neural networks often with hundreds of billions of parameters, e.g.,\citet{bmrskdoa20}, \citet{rbcmhsoi21}, \citet{spnlrcoc22}, \citet{tdhskcol22}.  However, trained models are sparse in that most of the parameters are negligible, raising the question as to whether the models really need to be large to perform, e.g., \citet{kmhbccgrwa20} and \citet{hbmbcros22}  in the context of Large Language Models (LLMs).  \citet{k24} suggests that LLMs are vastly overprovisioned compared to the theoretical estimated dimensionality of the training data.  While overprovisioned models can adapt well to the problem domain, they are prone to overfitting and poor generalization, \citet{clot21}.  As a result, large models are typically trained for just a single epoch, \cite{xfzzy23}.   

Our results are related to several categories of work in the literature.   Firstly, network pruning, e.g., \cite{lds89} \citet{hptd15}, \citet{hassibi1993optimal}, \citet{hptd15},  \citet{lszhd18}, \citet{bgfg20},\citet{slbk23},  \citet{habdp21}  and \citet{fa23}, which set to zero some of the parameters of a trained network in order to reduce model size with minimal loss of accuracy.  Secondly, knowledge distillation, e.g., \citet{hvd15}, \citet{ccyhc17} and \citet{amyma17}, which seek a smaller network that mimics a larger network with minimal loss of accuracy, and the application of distillation to regularization, e.g. \citet{ytlwf20}, \citet{gm21}.   Thirdly, learning theory, \citet{v84}, \citet{n89}, \citet{h92}, \citet{sb14}, \citet{behw87}, \citet{bp90},  and \citet{n93}.  Fourthly, regularization, e.g., \citet{t96}, \citet{kgc17}, which invokes Occam’s Razor to minimize the norm of the weights of a neural network by including the norm as an additive term in the loss function for gradient descent.   While regularization has empirical benefits, it does not reduce the model size, and including the norm in the loss function creates an ad hoc tradeoff between the training loss and the norm.

Building upon prior work, our Occam Pruning algorithm (named after Occam’s Razor: “The simplest explanation is most likely correct”) combines any training and pruning methods to simultaneously optimize efficiency and accuracy, identifying conditions for (a) monotonic convergence that resists overfitting (b) model size decreasing exponentially in the loss (c) outperforming the underlying training algorithm. We then use the algorithm to combine gradient descent with magnitude pruning into "Occam Gradient Descent,"  which works on any neural network without modifications or limitations such as random graphs or Gumbel softmax, e.g.,\citet{mmsngl18}, \citet{ztz23}.  Furthermore, our algorithm adaptively prunes a network to outperform the original network, improving on the "lottery ticket hypothesis" of \citet{fc18}, who hypothesize an ad hoc pruning method for comparable performance. We also note that in contrast to the large body of work on Neural Architecture Search, e.g. as surveyed in \citet{white2023neural}, our results are focused on training a given neural network.  

With respect to loss, compute and model size, our experiments show (a) on the MNIST and CIFAR10 image classification benchmarks, linear and convolutional neural networks trained with Occam Gradient Descent outperform traditional gradient descent with or without post-train pruning; (b) on a range of tabular data classification tasks, neural networks trained with Occam Gradient Descent outperform traditional gradient descent, as well as Random Forests, e.g.,\citet{sutton2005classification}, \cite{bs16}; (c) on natural language transformers, Occam Gradient Descent outperforms traditional gradient descent. While the experiments in this paper use magnitude pruning with the Adam and AdamW optimizers for traditional gradient descent, we expect similar performance on other variants per our theoretical results.

\section{Theoretical Results}
\label{theoretical_results}

Consider functions of the form $f:X\rightarrow [k]$, where $X$ is the domain and $[k]=\{1,2,...k\}$ is the range of $k$ labels. A neural network computes a space of functions $F:W \times [k]$, where $W$ is the space of the weights of the network.  For a specific choice of weights $w \in W $, the function $f:X\rightarrow[k]$ computed by the network is denoted as $f(x)=F(w,x)$, for $x\in X$. 

For function $f:X\rightarrow [k]$  and a probability distribution $P$ on $ X\times [k]$, the discrete loss of $f$ with respect to $P$ is the probability that $f$ is incorrect, i.e.,  
\begin{equation}
L(f,P)= P\{(x,y) : f(x)\neq y\} 
\end{equation}

\textbf{Neural Network Training Problem: }Given a collection of training samples $S=\{(x_i,y_i)\}$ drawn on a distribution $P$, compute $w \in W$ such that $f(x)=F(w,x)$ minimizes $L(f,P)$.

Gradient descent is the established method to solve the above problem.  Starting with random initial values for the weights, the training loss $L(f,S)$ is reduced along its steepest gradient on the weights, iterating over the training samples. Each pass across the set of training samples is called an epoch. However, if the training algorithm is run for multiple epochs on the training set, the weights are optimized for a distribution that favors the training samples rather than the natural distribution.  This is commonly referred to as overfitting, and leads to poor generalization and low accuracy on test data. To avoid overfitting, neural network models are over-provisioned but trained for just one epoch on the training data, resulting in inefficient use of both computing resources and training data.  Towards a theoretical analysis of the above, we examine the relevant learning theoretic results as surveyed in \citet{sb14}.

\begin{definition} Given $\epsilon$ and $\delta$ in (0,1), and samples drawn on probability distribution $P$, an agnostic learning algorithm finds $f\in F$ with confidence at least $(1-\delta)$ such that $L(f,P) = \min_{h\in F}L(h,P)+ \epsilon$
\end{definition}
There are several measures for the sample complexity of a space of functions that can be used to bound , such as the Vapnik Chervonenkis dimension, the Generalized dimension and Rademacher complexity.  Since we need bounds on multi-class functions, we use the notion of the Generalized dimension, \citet{n89}, also known as the Natarajan dimension, \citet{sb14}.   For brevity, we will simply refer to it as the dimension.

\begin{definition} (Generalized shattering) A set $C \subset X$ is \textit{shattered} by $F$ if there exist two functions $f_0, f_1$ in $F$ such that for every $x \in C$, $f_0(x) \neq f_1 (x)$; and for every $B \subset C$, there exists $f_2 \in F$, such that for all $x \in B, f_2(x) =f_0(x)$ , and for all $x \in (C-B)$, $f_2(x) = f_1(x)$.
\end{definition}

\begin{definition} The dimension of a space of functions $F$ is the size of the largest set shattered by it, and is denoted by $dim(F)$.
\end{definition}

Intuitively, the dimension of a space of functions is a measure of the richness of the space, i.e. the number of degrees of freedom across the functions in the space.   The following theorem is adapted from \citet{sb14}, and proved therein, based on \citet{h92}.
\begin{theorem}
For fixed $k$ and $\delta$, the error in any agnostic learning algorithm for a space of functions $F$ from $m$ samples is 
$\Theta [ (dim(F)/m)^{0.5}]$
\end{theorem}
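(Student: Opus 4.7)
The plan is to establish matching upper and lower bounds, both of order $\sqrt{dim(F)/m}$, since the $\Theta$ statement requires both directions. I would treat $k$ and $\delta$ as fixed constants throughout, so they are absorbed into the hidden constant.

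For the upper bound, I would analyze the Empirical Risk Minimization (ERM) algorithm that returns $\hat{f} \in \argmin_{f \in F} L(f,S)$. The classical route is a uniform convergence argument: show that with probability at least $1-\delta$, $\sup_{f \in F}|L(f,S)-L(f,P)| = O(\sqrt{dim(F)/m})$, from which the agnostic error bound for ERM follows by a standard two-line comparison with $\min_{h\in F} L(h,P)$. The key ingredient is a multi-class analogue of the Sauer--Shelah lemma: Natarajan's lemma bounds the growth function by $\Pi_F(m)\le m^{dim(F)}\binom{k}{2}^{dim(F)}$. Plugging this into a symmetrization / Rademacher-complexity argument (or equivalently, chaining plus Massart's finite lemma applied to the projection of $F$ onto the sample) yields the claimed $O(\sqrt{dim(F)/m})$ rate once $k$ is fixed.

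For the lower bound, I would use a No-Free-Lunch / Le Cam construction anchored at a maximally shattered set $C\subset X$ of size $d=dim(F)$, with witness functions $f_0,f_1$ provided by the definition of generalized shattering. Consider the family of distributions $P_B$ indexed by subsets $B\subset C$ where $x$ is drawn uniformly on $C$ and the label is either $f_0(x)$ or $f_1(x)$ with a bias of order $\epsilon$ toward the value dictated by $B$ (so that $f_2=f_B\in F$ is Bayes-optimal with loss roughly $\tfrac12-\epsilon$, while confusing $f_B$ with its $B$-neighbors costs $\Theta(\epsilon/d)$ per flipped coordinate). A Fano or Assouad argument then shows that distinguishing enough of the $2^d$ hypotheses to achieve excess error below $\epsilon$ requires $m = \Omega(d/\epsilon^2)$ samples, i.e., $\epsilon = \Omega(\sqrt{d/m})$ for some $P_B$, yielding the matching lower bound.

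The main obstacle is the uniform-convergence step in the multi-class setting: unlike the binary VC case, Natarajan's lemma introduces the extra factor $\binom{k}{2}^{dim(F)}$, which only becomes harmless because $k$ is held fixed; without that assumption the bound would carry logarithmic-in-$k$ factors and the $\Theta$ statement would need to be qualified. A secondary subtlety is that the Assouad-style lower bound must be compatible with the generalized notion of shattering (arbitrary subsets selecting between $f_0$ and $f_1$ pointwise), which is exactly what the definition of $dim(F)$ supplies, so the construction goes through unchanged from the binary case.
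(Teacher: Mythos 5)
The paper does not prove this theorem itself; it imports it from \citet{sb14} (building on \citet{h92}), and the argument given there is exactly the one you sketch: an ERM upper bound via uniform convergence using Natarajan's multi-class Sauer--Shelah lemma with the $\binom{k}{2}^{dim(F)}$ factor absorbed for fixed $k$ (with chaining needed to shed the residual $\log m$), and an Assouad/No-Free-Lunch lower bound over biased label distributions on a shattered set. Your proposal is correct and matches the cited proof in both directions, including the key caveat that the $\Theta$ claim only holds because $k$ is held fixed.
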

Therefore for fixed $k,\delta$, the expected test loss $L(f,P)$ of an agnostic learning algorithm is of the form 
\begin{equation}\label{agnostic_theta}
     L(f,P) = \min_{h\in F}L(h,P)+ \Theta[(dim(F)/m)^{0.5}]
\end{equation}
Equation \ref{agnostic_theta} applies to a neural network. $L(f,P)$ is the expected test loss, $min_{h\in F}L(h,P)$ is the approximation error, while the $[dim(F)/m]$ term is the estimation error.  A larger network with more trainable weights reduces the approximation error.   But the dimensionality of the network scales with the number of weights, and hence the estimation error increases with the size of the network.  Fig. 1 shows an oversized off-the shelf linear network (https://www.tensorflow.org/datasets/keras\_example) for the MNIST dataset.
\begin{figure}
    \centering
    \includegraphics[width=0.75\linewidth]{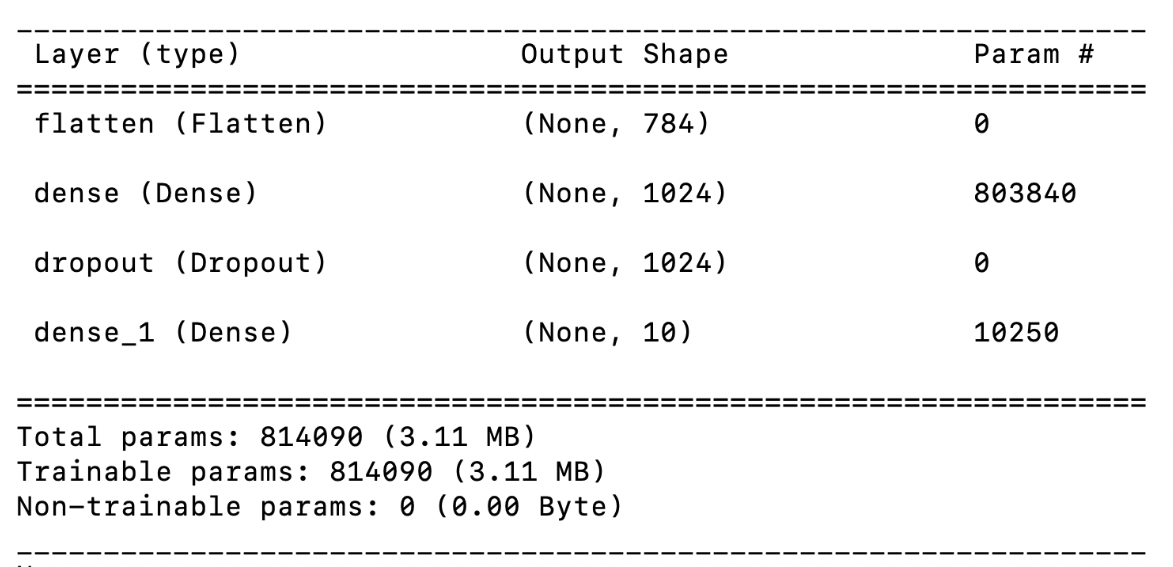}
    \caption{Oversized off-the shelf linear network for MNIST}
    \label{fig:MNIST_network}
\end{figure}
\begin{figure}
    \centering
    \includegraphics[width=0.45\linewidth]{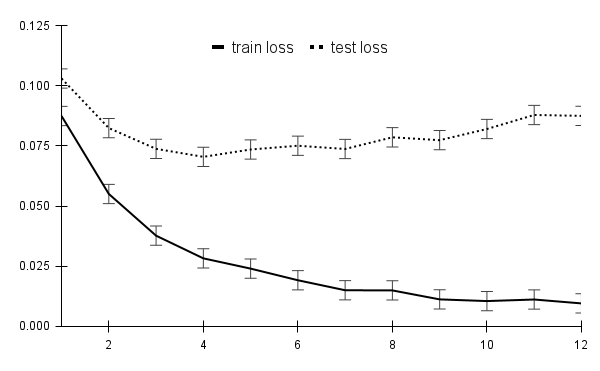}
    \includegraphics[width=0.45\linewidth]{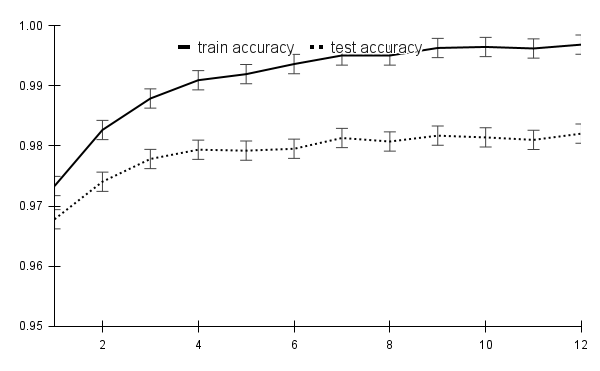}
    \caption{Gradient descent loss \& accuracy; MNIST; average of ten runs.}
    \label{fig:gd_loss}
\end{figure}

Fig. \ref{fig:gd_loss} shows\footnote{Throughout this paper, error bars are twice the maximum standard error of the mean across each plot, indicative of variability across the training runs.} the training and test cross-entropy loss and accuracy for the network across epochs during gradient descent training. Increasing the number of training epochs improves the training loss and accuracy, but the test loss improves for the first few epochs and then degrades due to overfitting.    On extremely large networks, the test loss may degrade after one epoch, resulting in poor utilization of the training samples, \citet{xfzzy23}.

To improve upon the above, we first analyze overfitting in the context of Equation \ref{agnostic_theta}.  Gradient descent training (a) initializes the weights in the network to small random values and then (b) operating on the training samples, amplifies weights that improve the training loss most.  Additional training epochs repeat step (b) on the training samples.  Fig. \ref{fig:conceptual} is a conceptual visualization of gradient descent. The ball is the space $F$ of all functions computable by the neural network. There are two points of interest on the boundary of the ball. First, the function computed by the neural network that minimizes the loss on the natural distribution, i.e. the  minimum-loss hypothesis $h_P={\arg}{\min}_{h\in F}L(h,P)$. Second, the function computed by the neural network instance that minimizes the loss on the training samples, i.e. $h_S={\arg}{\min}_{h\in F}L(h,S)$.   In other words, $h_P$ minimizes the test loss, while $h_S$ minimizes the training loss. The weights of the network are randomly initialized to compute a function $f_0 \in F$.  Thereafter, gradient descent iterates over the training samples and adjusts the weights to reduce the training loss so that the network computes functions $f_1, f_2, f_3...f_i$ after successive epochs.  During the first epoch, the function computed by the network moves closer to $h_P$.  In later epochs, the training samples are repeated, thereby pulling the network towards $h_S$.  In brief, the training loss, which is represented by the distance between $f_i$ and $h_S$, decreases across epochs. The test loss, which is represented by the distance between $f_i$ and $h_P$, initially decreases and then increases, causing the overfitting phenomenon of Fig. \ref{fig:gd_loss}.
\begin{figure}
    \centering
    \includegraphics[width=0.75\linewidth]{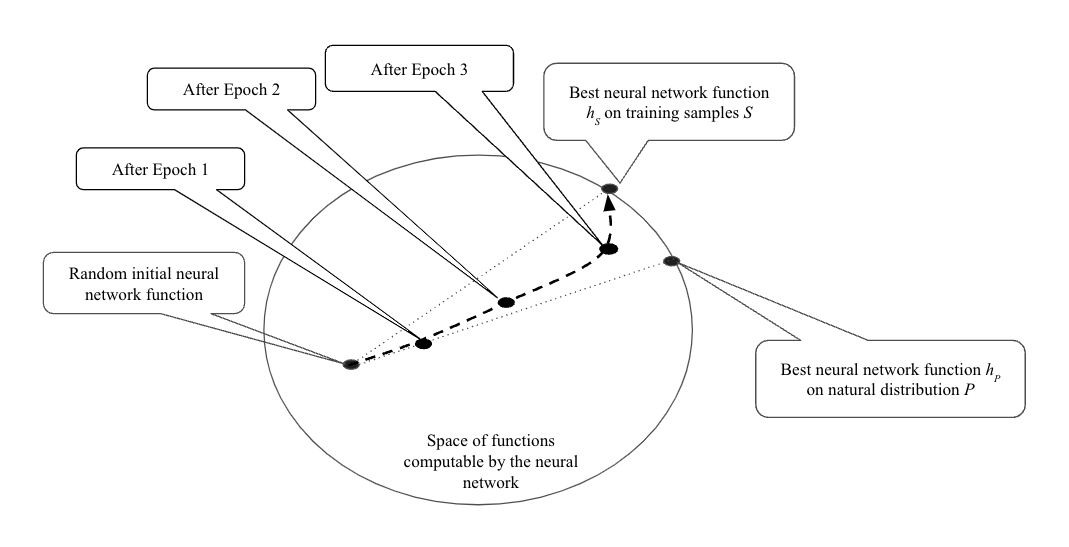}
    \caption{Conceptual visualization of gradient descent across epochs}
    \label{fig:conceptual}
\end{figure}

We seek to improve gradient descent so that we can train the network for multiple epochs without overfitting, thereby extracting more of the information in the training samples.  To that end, we first define a well-conditioned pruning algorithm as follows.   

\begin{definition}
    A well-conditioned pruning algorithm takes as input a neural network computing a function $f$ in a space of functions $F$ and a pruning step size $\lambda \in [0,\hat{\lambda}]\subseteq[0,1]$, and produces as output a reduced network  $G\subset F$, such that (a) the dimension of the space is reduced by a fraction $\lambda$, i.e., 
\begin{align}\label{prune_dim}
 dim(G) \leq (1-\lambda) dim(F)\
\end{align}
and (b) satisfies a Lipschitz condition in that there exists $ c\geq 0$ such that
\begin{align}\label{prune_error}
\min_{h\in G} L(h,S) -\min_{h\in F} L(h,S) \leq  c \lambda | L(f,S)   - \min_{h\in F} L(h,S)|
\end{align}
\end{definition}

Equation \ref{prune_error} bounds the error introduced by pruning via the Lipschitz contstant $c$ over the interval $[0,\hat{\lambda}]$ . The pruning algorithm may use any means to reduce the space of functions computed by the network, e.g., remove, freeze or quantize weights. With the foregoing in hand, we interleave training and pruning to improve performance and efficiency.  Specifically, starting with $F=F_0$, we alternate training and pruning such that $F_0 \supset F_1 \supset F_2...\supset F_i$, where $F_i$ is the space of functions computed by the network after pruning at the $i^{th}$ epoch, and $f_i$ is the function computed by the neural network after training at the $i^{th}$ epoch. This is formalized in the Occam Pruning Algorithm below, and provably good per Lemma 1.  The algorithm refers to $\tau_i$ as the "control" loss, which is the expected test loss at epoch $i$ as estimated over a small fraction of the training samples held back during training.  Also, let $\tau_{\infty}$ denote the algorithm's expected test loss at termination or in the limit.
\newpage
\begin{lemma}
If the initial pruning rate $\lambda_0>0$ is such that $\tau_3 <\tau_{2}<\tau_{1}$, and the pruning algorithm is well-conditioned, (a) $\tau_{i}$ converges monotonically; (b) the expected size of the network in terms of its dimension decreases by a factor no less than $e^{-\lambda_0(2+(\tau_{\infty}-\tau_2)/(\tau_2 - \tau_1)}$; and (c) if the Lipschitz constant of the pruning algorithm $c<1/2$ over the interval $[0,\lambda_0/(\tau_1/\tau_2-1)]$,  $\tau_{\infty}$ is less than the expected test loss of the underlying training algorithm.
\end{lemma}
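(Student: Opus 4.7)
The plan is to carry all three parts through the learning-theoretic decomposition of Equation~\ref{agnostic_theta} applied at every epoch,
\begin{equation*}
\tau_i = \min_{h \in F_i} L(h,P) + \Theta\bigl[(dim(F_i)/m)^{1/2}\bigr],
\end{equation*}
so that each iteration pulls on exactly two competing levers: gradient descent inside $F_i$ shrinks the approximation term while holding $dim(F_i)$ fixed, whereas well-conditioned pruning shrinks $dim(F_i)$ by a factor $(1-\lambda_i)$ via Equation~\ref{prune_dim} and perturbs the approximation term by at most $c\lambda_i |L(f_i, S) - \min_{h \in F_i}L(h,S)|$ via Equation~\ref{prune_error}. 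The whole proof is organized around balancing these two contributions epoch by epoch.

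For part (a), I would argue by induction. The burn-in hypothesis $\tau_3 < \tau_2 < \tau_1$ certifies that at rate $\lambda_0$ the pruning gain already outweighs the Lipschitz cost introduced by the space reduction. From there I would model the algorithm's adaptive rule---whose exact form is implied by the shape of the lemma---as $\lambda_i$ scaling with the observed control-loss improvement ratio, so that pruning automatically tapers toward zero as $\tau_i/\tau_{i-1} \to 1$. Substituting this into the decomposition produces a contractive one-step recursion for $\tau_i$, from which monotonic convergence follows by the monotone convergence theorem on the bounded non-negative sequence.

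For part (b), I would use $1 - \lambda \le e^{-\lambda}$ to obtain $\mathbb{E}\,dim(F_\infty)/dim(F_0) \le \exp(-\sum_i \lambda_i)$, then split $\sum_i \lambda_i$ into two burn-in terms contributing $2\lambda_0$ and a tail whose adaptive rates telescope with the loss decrement. Identifying the discrete derivative $\tau_{i-1}-\tau_i$ with $\lambda_0$ times the normalized step of the adaptive rule and summing from epoch 2 onward yields a tail contribution of $\lambda_0(\tau_\infty-\tau_2)/(\tau_2-\tau_1)$, giving the stated exponent. For part (c), the interval $[0,\lambda_0/(\tau_1/\tau_2-1)]$ contains every pruning rate the algorithm ever uses, since the initial ratio $\tau_1/\tau_2-1$ is the largest and the adaptive rates shrink thereafter; with $c<1/2$ on this interval, each pruning step inflates the residual approximation-error gap by strictly less than one half, so the total additive approximation error converges as a geometric series, while the strict dimension reduction from (b) shrinks the estimation term $\Theta\bigl((dim(F_i)/m)^{1/2}\bigr)$ relative to the no-pruning baseline; combining these in the decomposition places $\tau_\infty$ strictly below the expected test loss of the underlying training algorithm run on $F_0$.

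The main obstacle is part (b): the exponent $\lambda_0(2+(\tau_\infty-\tau_2)/(\tau_2-\tau_1))$ must be obtained exactly, not just qualitatively. This requires pinning down the precise adaptive schedule the algorithm uses---which is not spelled out in the excerpt but can be reverse-engineered from the form of the exponent---and verifying that the telescoping identity between $\sum_i \lambda_i$ and the observed loss decrement is tight, rather than a loose inequality that would spoil the constant in front of $\lambda_0$.
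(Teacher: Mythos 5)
Your overall strategy---applying the decomposition of Equation~\ref{agnostic_theta} at every epoch, reverse-engineering the adaptive rule (it is $\lambda_i=\lambda_{i-1}(\tau_i-\tau_{i-1})/(\tau_{i-1}-\tau_{i-2})$, which telescopes to $\lambda_i=\lambda_0(\tau_i-\tau_{i-1})/(\tau_2-\tau_1)$), and bounding the dimension product by $e^{-\sum_k\lambda_k}$ with the tail telescoping to $\lambda_0(\tau_\infty-\tau_2)/(\tau_2-\tau_1)$---is the paper's approach, and your part (b) is essentially a complete match. But there are two genuine gaps. In part (a) you rely on a ``contractive one-step recursion,'' i.e.\ on each epoch's net change being negative. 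The one-step bound actually extractable from Equations~\ref{prune_dim} and~\ref{prune_error} (via $(1-\lambda_i)^{1/2}\le 1-\lambda_i/2$) is $\tau_{i+1}-\tau_i\le\lambda_i C(c-1/2)(dim(F_{i-1})/m)^{1/2}$, whose sign depends on whether $c<1/2$---a hypothesis that part (a) does \emph{not} grant you. The paper closes the induction by substituting the adaptive rule together with the same bound taken one step earlier, yielding $\tau_{i+1}-\tau_i\le\frac{\lambda_{i-1}^2C^2(c-1/2)^2}{\tau_{i-1}-\tau_{i-2}}(dim(F_{i-1})/m)^{1/2}(dim(F_{i-2})/m)^{1/2}$: the factor $(c-1/2)$ appears squared, so strict negativity comes from the inductive hypothesis $\tau_{i-1}<\tau_{i-2}$ rather than from any assumption on $c$. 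Without that device your induction does not close.

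In part (c) you never locate the source of the threshold $1/2$. Your picture of the approximation error inflating as a geometric series while the estimation term shrinks is not the mechanism; the point is that the concavity bound $(1-\lambda)^{1/2}\le 1-\lambda/2$ makes the per-epoch net contribution exactly $\lambda_k C(c-1/2)(dim(F_{k-1})/m)^{1/2}$, so $c<1/2$ makes every summand in the unrolled recursion strictly negative, and summing produces the explicit strict gap $-C|1/2-c|\,\lambda_0(dim(F)/m)^{1/2}$ below the no-pruning test loss. Your containment of all rates in $[0,\lambda_0/(\tau_1/\tau_2-1)]$ is correct, but for a slightly different reason than you give: it follows from $\tau_{k-1}-\tau_k\le\tau_{k-1}\le\tau_2$ in the telescoped rule, not from the rates decreasing monotonically, which they need not do. Without identifying where the $1/2$ enters, you cannot convert ``$c$ small on that interval'' into the required strict inequality for $\tau_\infty$.
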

\begin{proof}
\textbf{Part (a) }We proceed by induction. Basis: $\tau_3 <\tau_{2}<\tau_{1}$; Induction: assume $\tau_i <\tau_{i-1}<\tau_{i-2}$ and show that $\tau_{i+1} <\tau_i$.    Now
\begin{align}\label{lambda}
\lambda_i = \frac{\tau_i - \tau_{i-1}}{\tau_{i-1} - \tau_{i-2}}\lambda_{i-1}  = 
\frac{(\tau_i - \tau_{i-1})(\tau_{i-1} - \tau_{i-2})...(\tau_{3} - \tau_{2})}
{(\tau_{i-1} - \tau_{i-2})...(\tau_{3} - \tau_{2})(\tau_{2} - \tau_{1})}\lambda_0
=   \frac{\tau_{i} - \tau_{i-1}}{\tau_{2} - \tau_{1}}\lambda_{0} 
\end{align}
Also, $\lambda_0>0$, per the inductive hypothesis $\tau_i - \tau_{i-1}<0$, and per the basis $\tau_2-\tau_1<0$. It follows that $\lambda_i>0$.  Next, per Equation \ref{agnostic_theta}, after the training step at each epoch, the expected test loss is asymptotically of the following form, where $m =|S|$ is the number of samples and  $C$ is a positive constant.
\begin{align}\label{tau_i}
\tau_{i}= L(f_{i},P) = \min_{h\in F_{i-1}}L(h,P)+ 
C  (dim(F_{i-1})/m)^{0.5}
\end{align}
After pruning, per Equation \ref{prune_error}
\begin{align*}
\min_{h\in F_{i}} L(h,S) -\min_{h\in F_{i-1}} L(h,S) \leq  c \lambda _{i}[( L(f_{i},S)   - \min_{h\in F_{i-1}} L(h,S))]
\end{align*}
Taking expectations on both sides of the above, we get
\begin{align*}
\min_{h\in F_{i}} L(h,P) -\min_{h\in F_{i-1}} L(h,P) \leq  
c \lambda_{i} [( L(f_i,P)   - \min_{h\in F_{i-1}} L(h,P))]
\end{align*}
Substituting Equation \ref{tau_i} in the right-hand-side above, we get
\begin{align}\label{delta_best_loss}
\min_{h\in F_{i}} L(h,P) -\min_{h\in F_{i-1}} L(h,P) \leq  
c \lambda_{i} C(dim(F_{i-1})/m)^{0.5}
\end{align}
Incrementing epoch index $i$ in Equation \ref{tau_i} we get
\begin{align}\label{tau_iplus1}
     \tau_{i+1}=L(f_{i+1},P) &= \min_{h\in F_{i}}L(h,P)+ 
    C  (dim(F_{i})/m)^{0.5}
\end{align}
Substituting Equations \ref{prune_dim} and \ref{delta_best_loss} in the above, we get
\begin{align*}
     \tau_{i+1} &\leq \min_{h\in F_{i-1}}L(h,P)+c \lambda_{i} C(dim(F_{i-1})/m)^{0.5}  +
    C (dim(F_{i-1})/m)^{0.5} (1-\lambda_i)^{0.5}
\end{align*}
Substituting the Taylor series inequality $(1-\lambda_i)^{0.5} \leq (1-\lambda_i/2)$ and rearranging, we get
\begin{align*}
     \tau_{i+1} \leq [\min_{h\in F_{i-1}}L(h,P)+ C (dim(F_{i-1})/m)^{0.5}] + 
    \lambda_{i} C(c-1/2)(dim(F_{i-1})/m)^{0.5}
\end{align*}
Substituting Equation \ref{tau_i} in the above and rearranging, we get
\begin{align}\label{delta_tau_1}
     \tau_{i+1} - \tau_{i} \leq  \lambda_{i} C(c-1/2)(dim(F_{i-1})/m)^{0.5}
\end{align}
Likewise, we obtain
\begin{align}\label{delta_tau_2}
 \tau_{i} - \tau_{i-1} \leq \lambda_{i-1}C (c-1/2) (dim(F_{i-2})/m)^{0.5}
\end{align}
Substituting $\lambda_i = \lambda_{i-1} (\tau_i - \tau_{i-1})/(\tau_{i-1} - \tau_{i-2})$ per the algorithm in Equation \ref{delta_tau_1} we get
\begin{align*}
 \tau_{i+1} \leq \tau_{i} 
+ \lambda_{i-1}\frac{\tau_{i}- \tau_{i-1}}{\tau_{i-1}-\tau_{i-2}}
 C(c-1/2)(dim(F_{i-1})/m)^{0.5}
\end{align*}
Substituting Equation \ref{delta_tau_2} in the above, 
\begin{align}
 \tau_{i+1} \leq \tau_{i} 
+ \frac{\lambda_{i-1}^2C^2(c-1/2)^2}{\tau_{i-1}-\tau_{i-2}}
dim(F_{i-1})/m)^{0.5}dim(F_{i-2})/m)^{0.5}
 \label{prune_bound_3}
\end{align}
Earlier we showed that $\lambda_{i-1}>0$. Now $dim(F_{i-2}) >0$ else $\tau_{i-1} = \tau_{i-2}$ and the algorithm would have terminated. Likewise $dim(F_{i-1})>0$; and by the inductive assumption $\tau_{i-1}<\tau_{i-2}$. Hence the second term on the right-hand-side above is strictly negative implying $\tau_{i+1} < \tau_i$.  And per the inductive basis, $\tau_{3}< \tau_{2} <\tau_1$,  it follows that $\tau_{i+1}< \tau_{i}$.   

\textbf{Part (b)}  The final dimension of the network as a multiple of the initial dimension is as follows, using the inequality $1-x \leq e^{-x}$. 
\begin{align*}
\prod_{k=1}^{\infty} (1-\lambda_{k}) = 
(1 - \lambda_0)^2  \prod_{k=3}^{\infty} (1-\lambda_{k}) 
\leq e^{- 2\lambda_0 - \sum_{k=3}^{\infty} \lambda_{k}}
\end{align*}
Using Equation \ref{lambda}, we can write
\begin{align*} 
 2\lambda_0 + \sum_{k=3}^{\infty} \lambda_{k} = 2\lambda_0 + \sum_{k=3}^{\infty}\frac{\tau_{k} - \tau_{k-1}}{\tau_{2} - \tau_{1}}\lambda_{0} 
 = 2\lambda_0 + \frac{\tau_{\infty} - \tau_{2}}{\tau_{2} - \tau_{1}}\lambda_{0} 
 = \lambda_0 \left (2 +\frac{\tau_{\infty} - \tau_{2}}{\tau_{2} - \tau_{1}}\right).
\end{align*}
Hence the final dimension of the network is at most $e^{-\lambda_0(2+(\tau_{\infty}-\tau_2)/(\tau_2 - \tau_1)}$ the initial dimension.

\textbf{Part (c)} Returning to Equation \ref{tau_i} and recursively invoking Equations \ref{prune_dim} and \ref{prune_error}, we get
\begin{align}
\tau_{i} &= L(f_{i},P) = \min_{h\in F_{i-1}}L(h,P)+ C (dim(F_{i-1})/m)^{0.5}\notag\\
  &\leq \min_{h\in F_{i-2}}L(h,P)+  \lambda_{i-1
} C(c-1/2)(dim(F_{i-2})/m)^{0.5} +
  C (dim(F_{i-2})/m)^{0.5}\notag\\
  &...\notag\\
    &\leq  \min_{h\in F_{0}}L(h,P)+
    \sum_{k=1}^{i-1} \lambda_{k} C(c-1/2)(dim(F_{k-1})/m)^{0.5} +
C(dim(F_{0})/m)^{0.5} \label{inf_bound_1}
\end{align}
Since $\tau_i< \tau_{i-1}...\tau_1$ for all $i$, Equation \ref{lambda} implies that $\lambda_i < \lambda_0\tau_2/(\tau_1-\tau_2)$.  Therefore if $c<1/2$ over the interval $[0,\lambda_0/(\tau_1/\tau_2-1)]$, we have
\begin{align*}
C(c-1/2)\sum_{k=2}^{i-1} \lambda_{k}(dim(F_{k-1})/m)^{0.5} 
&\leq - C|1/2-c|\lambda_{0} (dim(F_{0})/m)^{0.5}
\end{align*}
Substituting $F_0 = F$ and combining the above with Equation \ref{inf_bound_1} we get
\begin{align}
\tau_{\infty} = \lim_{i\rightarrow \infty}\tau_{i}  \leq [\min_{h\in F}L(h,P)+C(dim(F)/m)^{0.5}] - C|1/2-c| \lambda_{0} (dim(F)/m)^{0.5}
\label{inf_bound_2}
\end{align}
Noting that the first term on the right of the above inequality is the expected test loss of the training algorithm without pruning, and that the second term is strictly negative, yields the result.
\end{proof}
The additional hyperparameter for the initial pruning rate $\lambda_0$ is a limitation of our algorithm.  If the underlying training algorithm is not optimal, an arbitrarily small $\lambda_0$ can be chosen to satisfy the condition of the lemma.  However, larger values of $\lambda_0$ improve performance per Equation \ref{inf_bound_2}.  Hence, as with all adaptive training, e.g., \citet{kb14}, the choice of the initial pruning rate plays an important role.  

The algorithm refers to $\tau_i$ as the control loss, which is the test loss over a small fraction of the training samples held back during training.  In our experiments of the next section, using the training loss for control performed just as well as using the loss on a holdback subset of training samples.  In practice, we cannot assert that popular pruning algorithms are well-conditioned per our definition above. Nevertheless, we use Occam Pruning to combine gradient descent training with the popular magnitude pruning into "Occam Gradient Descent," where at each pruning step, the smallest multiplicative weights by magnitude are clamped to zero, leaving the bias weights untouched.  Specifically, for pruning step size $\lambda$, multiplicative weights smaller in absolute value than the $\lambda$-quantile of each layer are clamped to zero, where the $\lambda$-quantile of a distribution is the value $q$ such that $\lambda$ is the mass of the distribution below $q$. For example, for $\lambda=0.5, q$ is the median.  Pruning can be based on other measures of relative importance amongst the weights; and other measures of the dimension of a network, e.g. the norm of the weights as in regularization, Bartlett (1996).  
\begin{figure}
    \centering
    \includegraphics[width=0.75\linewidth]{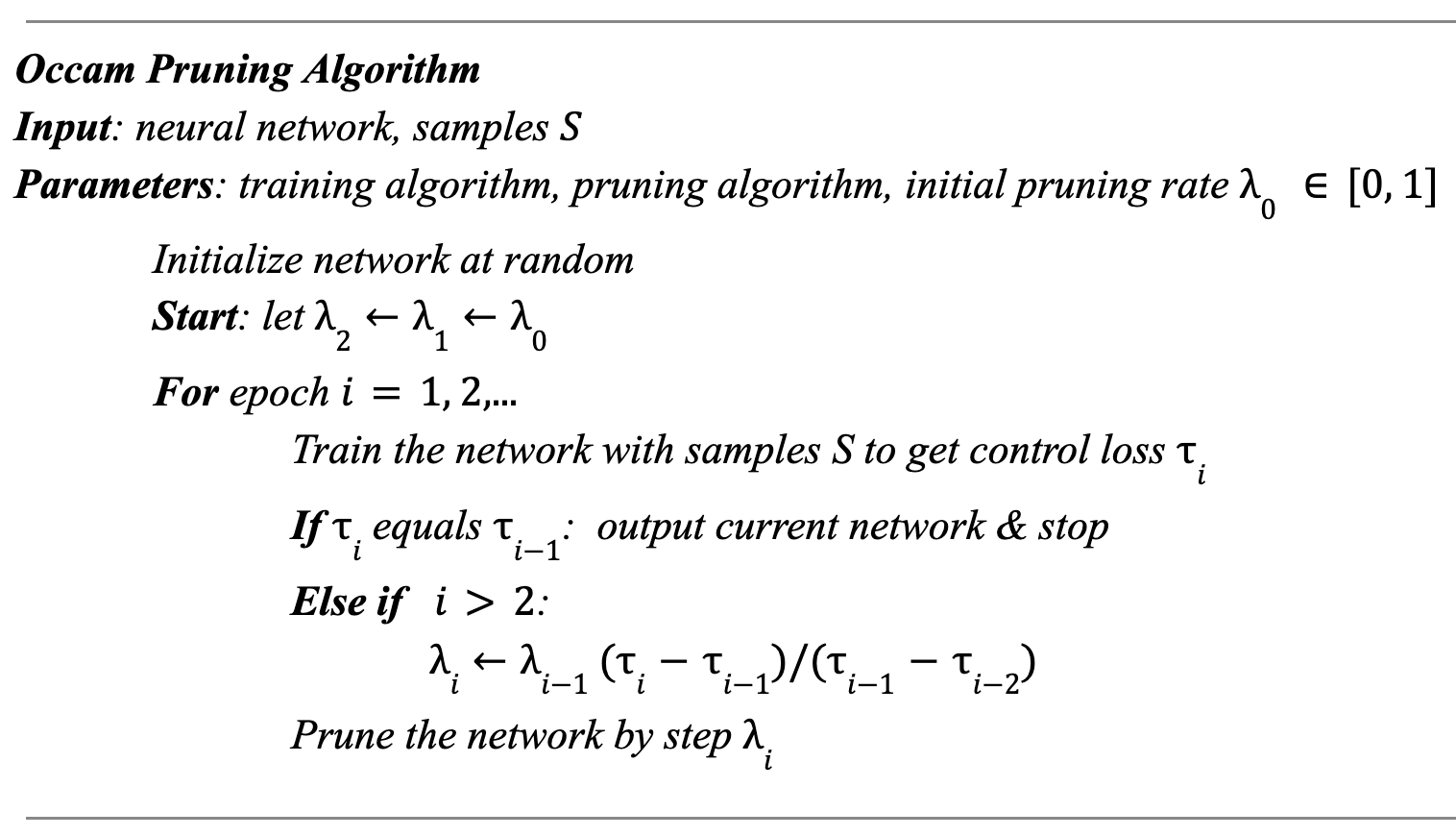}
\end{figure}
In practice, we found that stability is improved by constraining $\lambda$ to a positive interval, e.g. $[\lambda_0/10,\lambda_0 ]$. We also found that on large training sets,  the pruning step in the algorithm can be performed at fractional epochs. Furthermore, several variants of the algorithm are possible such as restoring a small random fraction of the weights that were set to zero previously.
\begin{figure}
    \centering
    \includegraphics[width=0.45\linewidth]{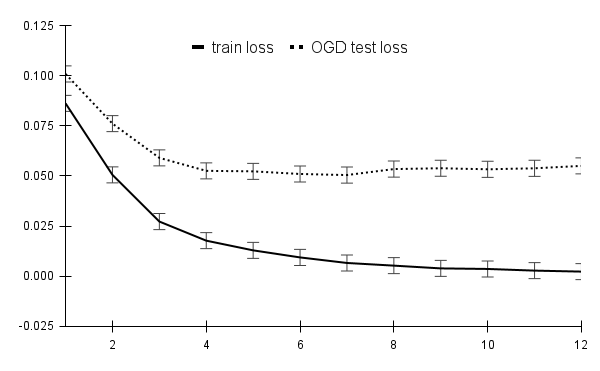}
    \includegraphics[width=0.45\linewidth]{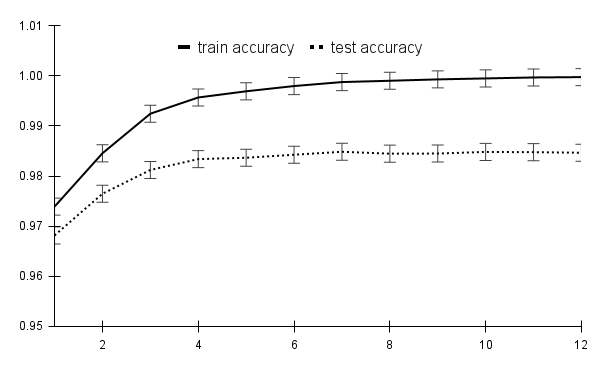}
    \caption{Occam Gradient Descent loss \& accuracy; MNIST; average of ten runs; $\lambda=0.4$}
    \label{fig:ogd_loss}
\end{figure}
Fig. \ref{fig:ogd_loss} shows the loss and accuracy with Occam Gradient Descent on the network of Fig. \ref{fig:MNIST_network} for MNIST.  Comparing Fig. \ref{fig:ogd_loss} with Fig. \ref{fig:gd_loss}, it is clear that the algorithm resists overfitting to substantially improve test loss and accuracy.  Furthermore, since the model size decreases with epochs, the total computation cost of training is reduced as noted in the next section.  If training is continued for many epochs after the test loss levels off, eventually the model gets so small that the conditions of Lemma 1 no longer hold, At that point, the test loss explodes.

In brief, Occam Gradient Descent interleaves adaptive reduction of model size to minimize the generalization error, with gradient descent on model weights to minimize training loss. In contrast, traditional gradient descent greedily minimizes training loss without regard to the generalization error.

\section{Experimental Results}
\label{Experimental_results}

Unless specified otherwise, all experiments in this paper used magnitude pruning of the multiplicative weights and gradient descent with the Adam optimizer, running on an M1-Mac laptop.   Our limited computational resources restricted us from testing the largest models.

Table \ref{tab:MNIST_results} compares the performance of the Occam Gradient Descent algorithm on the MNIST network of Fig.1 over twelve epochs, across ten runs.   For each algorithm, the table shows the statistics averaged at the epochs with the minimum test loss for each run. We remind the reader that the average at the minimum test loss is distinct from the minimum of the average test loss of Fig. \ref{fig:gd_loss} and Fig. \ref{fig:ogd_loss}. The first row shows the statistics for Gradient Descent. The second row shows the statistics for the Occam Gradient Descent algorithm at an initial pruning rate $\lambda_0 =0.4$ and $10\%$ holdback for the control loss. For this algorithm, at the minimum test loss, the average size of the network is ~$\approx 23\%$ of the original in terms of the number of non-zero weights. The projected compute cost is the average cumulative cost of the epochs across the shrinking network. The third row shows the statistics for Occam Gradient Descent without any holdback, but using the training loss for control. The last row shows the performance of conventional post-train pruning: 6 epochs of gradient descent followed by pruning to target size of $\approx 21\%$, and then retraining for 6 epochs.  The 6 epochs are the rounded average number of epochs at which traditional gradient descent minimized test loss, per the “Compute” column of the first row in the table.  It is evident that Occam Gradient Descent outperforms the other approaches at a lower computational cost, lower test loss, and smaller model size.

\begin{table}
\small
    \caption{Performance on MNIST network of Fig.\ref{fig:MNIST_network}; 12 epochs; 10 run average at best test loss} 
    \label{tab:MNIST_results}
    \begin{tabular}{ccccccc}
        \toprule
        Algorithm & training loss & Train Acc. & Test Loss & Test Acc. & Size & Compute \\
        \midrule
        Gradient Descent & 0.0187 &99.4\% &0.066 &98.1\% &100\% &5.9 \\
        \cmidrule(r){1-1}
        Occam Gradient Descent \\ (10\% holdback loss, $\lambda_0=0.4$) & 0.0066 & 99.9\% & 0.05 & 98.5\% & 23\% &3.1 \\
         \cmidrule(r){1-1}
        Occam Gradient Descent \\ (training loss, $\lambda_0=0.4$) & 0.008 & 99.9\% & 0.049 & 98.5\% & 21\% & 2.8 \\
         \cmidrule(r){1-1}
        Conventional post-train pruning & 0.0024 & 99.9\% & 0.056 & 98.5\% & 21\% & 7.1 \\
        \bottomrule
    \end{tabular}
\end{table}

Fig. \ref{fig:CIFAR_network} shows an oversized off-the shelf network for the CIFAR10 dataset combining both linear units and convolutional units. Fig. \ref{fig:CIFAR_results} shows the corresponding results comparing the performance of gradient descent and Occam Gradient Descent with training loss control. As for MNIST in Table \ref{tab:MNIST_results}, 10\% holdback loss control showed similar performance.  Fig. \ref{fig:CIFAR_network} also shows the fractional size of the model by epoch. It is clear that Occam Gradient Descent resists overfitting and outperforms gradient descent, even while reducing the network to a small fraction of its original size.
\begin{figure}
    \centering
    \includegraphics[width=0.75\linewidth]{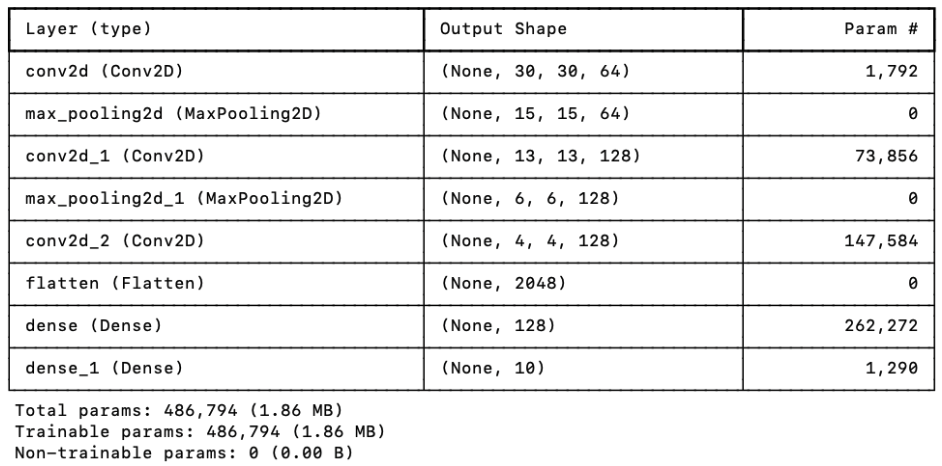}
    \caption{Oversized off-the shelf convolutional network for CIFAR10}
    \label{fig:CIFAR_network}
\end{figure}
\begin{figure}
    \centering
    \includegraphics[width=0.6\linewidth]{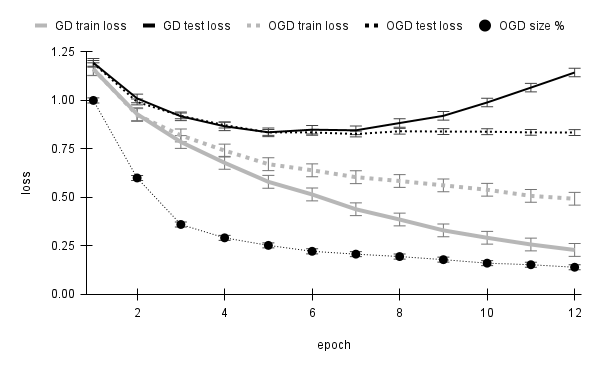}
    \caption{Gradient Descent (GD) \& Occam Gradient Descent (OGD);
 CIFAR10; training loss control; average of 10 runs; $\lambda_0=0.4$}
    \label{fig:CIFAR_results}
\end{figure}

\begin{figure}
    \centering
    \includegraphics[width=0.75\linewidth]{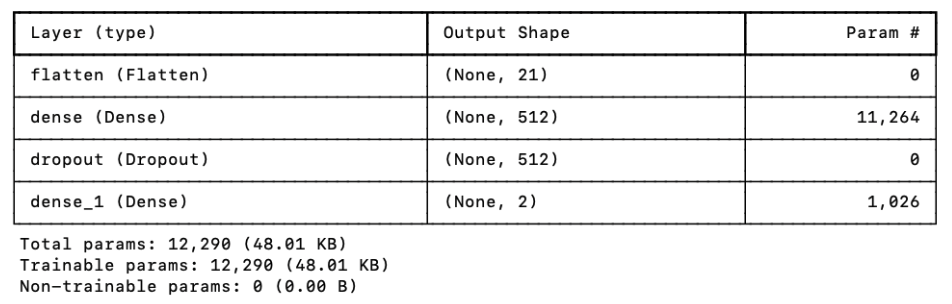}
    \caption{Dense linear network of 512 units for classification of tabular data with 21 input features}
    \label{fig:tabular_network}
\end{figure}

We now consider the classification of tabular data where an unknown feature must be predicted from a set of known features, a frequent application of machine learning. For example, given a set of patient vitals, diagnose the disease. Such problems are commonly addressed via Boosted Trees and Random Forests, see for example, \citet{sutton2005classification}, \citet{bs16}.  While Boosted Trees and Random Forests naturally resist overfitting, they typically create large models that scale up with the size of the data set.  In contrast, deep learning models are relatively compact but subject to overfitting when trained via gradient descent. Since Occam Gradient Descent addresses the overfitting limitation of deep learning networks, we test its applicability to classifying tabular data. Specifically, we compare the performance of Random Forests against a simple neural network with 512 dense linear units trained on tabular data sets. Fig. \ref{fig:tabular_network} shows the network for a data set with 21 input features.

Table \ref{tab:tabular_results} compares the performance of Random Forests and neural networks trained on a range of binary tabular classification data sets from the UC Irvine repository (https://archive.ics.uci.edu/datasets).  Each data set was randomly split into a training set comprising 75\% of the samples and a test set of the remaining samples, the split being fixed across runs. For Random Forests, we used the default settings in TensorFlow.  On each data set, neural networks of the form of Fig. \ref{fig:tabular_network} were trained for 12 epochs with Gradient Descent (GD Neural Network), and Occam Gradient Descent (OGD Neural Network) with pruning rate  $\lambda_0=0.4$ using training loss for control. Table \ref{tab:tabular_results} reports averages at the final epoch across ten runs. Fig. \ref{fig:tabular_compare} is a visual summary of Table \ref{tab:tabular_results}. In brief, compared to Random Forests on average across the data sets: neural networks trained with Gradient Descent are $\approx 16\%$ smaller at $\approx 11\%$ better cross-entropy loss; while neural networks trained with Occam Gradient Descent are $\approx 80\% $ smaller at $\approx 20\%$ better cross-entropy loss.

\begin{table}
\tiny
    \centering
    \caption{Tabular Classification: GD \& OGD Neural Networks; Random Forests (10 run average)}
    \label{tab:tabular_results}    
    \begin{tabular}{ccccccccl}
 & & & \multicolumn{2}{c}{Random Forest}& \multicolumn{2}{c}{GD Network}& \multicolumn{2}{c}{OGD Network}\\
         Dataset&  Samples&  Features&  Size (nodes)& Test Loss & Size (wts) & Test Loss & Size(wts) &Test Loss\\
         \midrule
         Census Income&  48,842&  14&  544,120&  0.429&  8,706&  0.311&  1,745&0.308\\
         Breast Cancer&  569&  30&  5,872&  0.236&  16,898&  0.103&  3,082&0.119\\
         Heart Disease&  303&  13&  11,754&  0.367&  8,194&  0.368&  1,893&0.374\\
 Credit Card Default& 30,000& 23& 546,372& 0.521& 13,314& 0.441& 2,448&0.434\\
 Room Occupancy& 10,129& 18& 7,630& 0.003& 10,754& 0.005& 2,707&0.003\\
         CDC Diabetes &  253,680&  21&  2,909,684&  0.540&  12,290&  0.315&  2,239&0.313\\
         \bottomrule
    \end{tabular}

\end{table}

\begin{figure}
    \centering
    \includegraphics[width=0.45\linewidth]{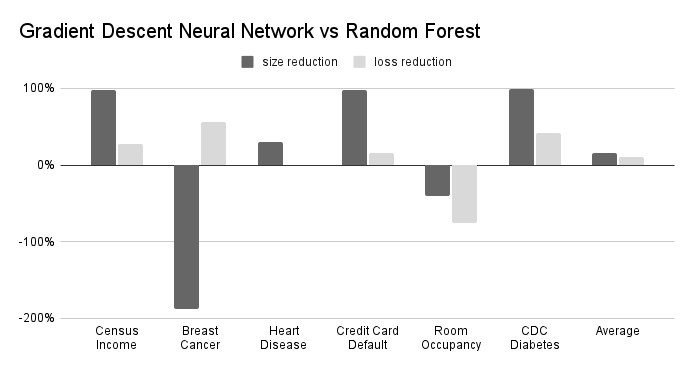}
    \includegraphics[width=0.45\linewidth]{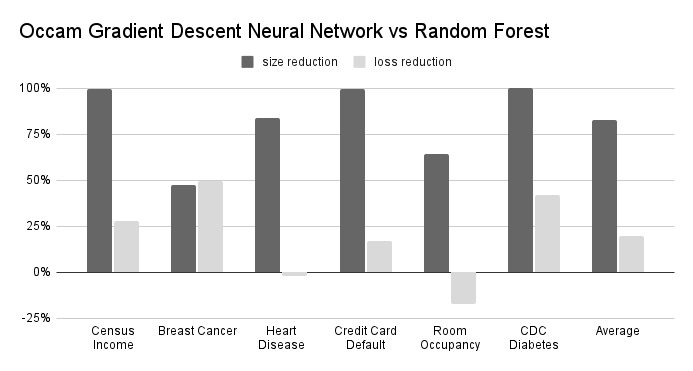}
    \caption{Size and loss reduction on tabular data sets}
    \label{fig:tabular_compare}
\end{figure}

Next, we apply Occam Gradient Descent to natural language transformer models.  Specifically, the open source nanoGPT model (https://github.com/karpathy/nanoGPT/blob/master/README.md) of Table \ref{tab:nanogpt} with dropout = 0.2 trained on Shakespeare’s works, with a training set of ~1M tokens, and a test set of $\approx 100K$ tokens. Fig. \ref{fig:OGD_transformer} shows the test loss for Gradient Descent and the test loss and model size for Occam Gradient Descent against training epochs.  Using training loss control, the initial pruning rate $\lambda_0=0.4$, and the contraction step is applied at intervals of 0.03 epochs. Under gradient descent training, the model overfits and the test loss is a minimum of 1.4587 at 0.18 epochs, rising with further training.  Under Occam Gradient Descent, both the test loss and the model size continue to improve with training. The test loss under Occam Gradient Descent surpasses that of Gradient Descent at 0.264 epochs, at which point the model size is $20\%$ of its original size in terms of the number of non-zero weights.  Reflective of the declining model size, the compute required by Occam Gradient descent for 0.264 epochs is ~60\% of the compute effort required by Gradient Descent to achieve its minimum loss at 0.18 epochs.  We note that this experiment uses the AdamW optimizer and that OGD performs on top of the built-in regularization.

\begin{table}
 \centering
 \caption{nanoGPT LLM with $\approx 11M$ parameters on Shakespeare’s works}
 \label{tab:nanogpt}
    \begin{tabular}{cccc}
    \toprule
         Layers&  Heads&  Context Length& Embed Dimension\\
         \midrule
         6&  6&  128& 384\\
          \bottomrule
    \end{tabular}
\end{table}

\begin{figure}
    \centering
    \includegraphics[width=0.8\linewidth]{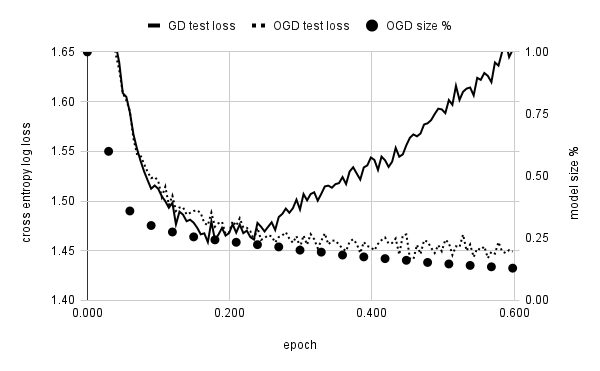}
    \caption{Gradient Descent (GD) \& Occam Gradient Descent (OGD) on LLM}
    \label{fig:OGD_transformer}
\end{figure}
\section{Summary}
Deep learning neural network models must be large enough to adapt to their problem domain, while small enough to avoid overfitting training data during gradient descent.  To balance these competing demands, overprovisioned deep learning models such as transformers are trained for a single epoch on large data sets, and hence inefficient with both computing resources and training data.  In response to these inefficiencies, we derive a provably good algorithm that combines training and pruning to simultaneously optimize efficiency and accuracy,  identifying conditions that resist overfitting and reduce model size while outperforming the underlying training algorithm. We then use the algorithm to combine gradient descent with magnitude pruning into "Occam Gradient Descent." With respect to loss, compute and model size  our algorithm outperforms traditional gradient descent on a broad range of data sets.  

\newpage
\bibliography{bibtex}
\bibliographystyle{iclr2026_conference}
\end{document}